\icmltitlerunning{Reinforcing Adversarial Robustness using Model Confidence
  Induced by Adversarial Training}
\newcommand{\HCNN}{{\tt HCNN}}
\newcommand{\MCN}{{\tt MCN}}
\newcommand{\NCN}{{\tt NCN}}
\newcommand{\OracleHCNN}{{\rm OracleHCNN}}
\begin{document}
\twocolumn[                     
\icmltitle{
  Reinforcing Adversarial Robustness using Model Confidence \\
  Induced by Adversarial Training
}
\icmlsetsymbol{equal}{*}
\begin{icmlauthorlist}
  \icmlauthor{Xi Wu}{equal,goo}
  \icmlauthor{Uyeong Jang}{equal,uwm}
  \icmlauthor{Jiefeng Chen}{uwm}
  \icmlauthor{Lingjiao Chen}{uwm}
  \icmlauthor{Somesh Jha}{uwm}
\end{icmlauthorlist}

\icmlaffiliation{uwm}{University of Wisconsin-Madison}
\icmlaffiliation{goo}{Google}

\icmlcorrespondingauthor{Xi Wu}{xiwu@cs.wisc.edu}
\vskip 0.3in
]                               


\printAffiliationsAndNotice{\icmlEqualContribution} 

\begin{abstract}
  In this paper we study leveraging \emph{confidence information} induced by adversarial training
  to reinforce adversarial robustness of a given adversarially trained model.
  A natural measure of confidence is $\|F(\bfx)\|_\infty$
  (i.e. how confident $F$ is about its prediction?).
  We start by analyzing an adversarial training formulation proposed by Madry et al..
  We demonstrate that, under a variety of instantiations, an only somewhat good solution to their objective
  induces confidence to be a discriminator, which can distinguish between
  right and wrong model predictions in a neighborhood of a point sampled from the underlying distribution.
  Based on this, we propose Highly Confident Near Neighbor ($\HCNN$),
  a framework that combines confidence information and nearest neighbor search,
  to reinforce adversarial robustness of a base model. We give algorithms in this framework
  and perform a detailed empirical study. We report encouraging experimental results
  that support our analysis, and also discuss problems we observed with existing adversarial training.
\end{abstract}


\section{Introduction}
\label{sec:intro}
In the adversarial-perturbation problem for neural networks, an adversary starts with a neural network
model $F$ and a point $\bfx$ that $F$ classifies correctly (we assume that $F$ ends with a softmax layer,
which is common in the literature), and crafts a small perturbation to produce another point $\bfx'$
that $F$ classifies \emph{incorrectly}. \cite{SZSBEGF13} first noticed the vulnerability of
existing neural networks to adversarial perturbations, which is somewhat surprising given their
great generalization capability. Since then, a line of research
(e.g.,~\cite{GSS14, PMWJS16, MMKI17, MMSTV17}) has been devoted to hardening neural networks
against adversarial perturbation. While modest progress has been made, until now there is still a large gap
in successfully defending against more advanced attacks, such as the attack by~\cite{CW17}.

In this paper we study leveraging \emph{confidence information} induced by adversarial training,
to reinforce adversarial robustness of a given adversarially trained model.
A natural measure of confidence is $\|F(\bfx)\|_\infty$
(i.e. how confident $F$ is about its prediction?).
Our motivation comes from a thought experiment based on the manifold assumption~\cite{ZG09} made in
unsupervised and semi-supervised learning, which states that
\emph{natural data points lie on (or near to) separate low dimensional manifolds for different classes.}
If one believes that a good deep neural network model can approximate well the natural manifolds,
then ideally it should have the property that it can \emph{confidently} distinguish points
from natural manifolds, while not claiming confidence for points that are far away from
the natural manifolds. Since natural manifolds are assumed to be separate,
it follows that for a good model,
\emph{confident model predictions for different classes must be separate from each other.}

Taking this perspective, we formalize a probabilistic property on the separation of confident
model predictions for different classes, and use it to examine existing adversarial training formulations.
Somewhat surprisingly, we find that a natural min-max formulation proposed
by~\cite{MMSTV17}, under a variety of instantiations, encourages training a model
that satisfies our property well. At a high level, the reason for this is that a good solution to the
inner maximization of the min-max formulation encourages that there is \emph{no confident wrong}
prediction in a neighborhood of a point sampled from the underlying distribution.

Our key observation is that, even if a model is only \emph{a somewhat good} solution under
Madry et al.'s adversarial training formulation, it \emph{induces} confidence to be
a discriminator which can distinguish between right and wrong model predictions.
That is, a model prediction of higher confidence is more likely to be correct.
As a result, even though a somewhat good adversarially trained model
may still have many wrong predictions \emph{with low confidence},
one can use confidence to further reinforce its adversarial robustness.
For example, one can now try to protect the model, from making wrong predictions,
by \emph{rejecting} a point if model confidence at the point is below certain threshold.

We take a step further to study correcting model predictions at adversarial points,
leveraging that model predictions there tend to have lower confidence.
We propose Highly Confident Near Neighbor ($\HCNN$),
a framework which combines confidence information and nearest neighbor search,
to embed a possibly low-confidence point back to high-confidence regions.
Specifically, let $\|\cdot\|$ be a norm, and suppose that we want to defend against $\|\cdot\|$-attacks
with radius $\eta$. Let $F$ be a model with good separation property so that confidence
can distinguish right and wrong model predictions in a $\delta$-neighborhood of a point
sampled from the underlying distribution. Let $|\cdot|$ be a possibly different norm,
and $\xi > 0, \lambda \ge 0$ be two real numbers.
The $\xi$-Highly Confident Near Neighbor, or simply $\HCNN_\xi$, computes
$\argmax_{\bfz \in N(\bfx, \xi)}(\|F(\bfz)\|_\infty - \lambda|\bfz - \bfx|)$,
where $N(\bfx, \xi)$ is the $\xi$-neighborhood around $\bfx$ w.r.t. $\|\cdot\|$.
That is, in a $\xi$-ball centered at $\bfx$, we encourage finding a point which simultaneously has
a high-confidence prediction, and is near to $\bfx$. Therefore, one can hope to correct adversarial points
for $\eta \le \delta - \xi$. It turns out in our experiments that there are nontrivial settings of
$\delta, \eta, \xi$ under which we demonstrate a significant improvement of model robustness over $F$.

We perform a detailed empirical study over CIFAR10 for $\ell_\infty$ attacks.
We reuse the robust ResNet model trained by Madry et al. as \emph{base model},
and use $\HCNN_\xi$. We modify state-of-the-art $\ell_\infty$ attacks, such as the CW attack~\cite{CW17},
and the PGD attack~\cite{MMSTV17}, to \emph{exploit confidence information in order to break our method by
  generating high-confidence attacks}. We first empirically validate that Madry et al.'s model is better,
in view of our probabilistic separation property, than models trained without a robustness objective.
We then evaluate using confidence to reject adversarial examples, and finally end-to-end defense results.
Our results are both encouraging and discouraging: for small radius, we find that confidence is indeed
a good discriminator to distinguish right and wrong predictions, and it does improve adversarial robustness.
For large radius that approaches $.03$, however, we find that Madry et al.'s base model loses control there.
We further argue that modifications are needed for Madry et al.'s adversarial training in order to
defend against more effectively $\ell_\infty$-attacks of large radius.

In summary this paper makes the following contributions:
\begin{itemize}
\item We propose a probabilistic property of separation between confident predictions for different classes.
  We prove that a natural min-max adversarial training encourages a model that satisfies this property well.
  As a result, confidence can be used as a discriminator to distinguish between right and wrong predictions,
  even when a model is only a somewhat good solution to the training objective.

\item We propose using confidence induced by adversarial training to further reinforce adversarial
  robustness of a given adversarially trained model. Specifically, we propose rejecting adversarial points
  using confidence, and further correcting adversarial points by combining confidence with nearest neighbor search.

\item We perform a detailed empirical study to validate our proposal.
  We report encouraging results that support our analysis,
  and discuss problems we observed with existing adversarial training.
\end{itemize}

The rest of the paper is organized as follows: We start with some preliminaries in Section~\ref{sec:preliminaries}.
Then Section~\ref{sec:goodness-property} proposes the probabilistic separation property and we use it to
analyze Madry et al.'s adversarial training formulation. We then present embedding objectives and algorithms
for handling low-confidence points, and end-to-end instantiations, in Section~\ref{sec:defense}.
Section~\ref{sec:experiments} performs a detailed empirical study of our method.
We discuss important prior work in Section~\ref{sec:related} and conclude in Section~\ref{sec:conclusion}.


\section{Preliminaries}
\label{sec:preliminaries}
As in existing work, such as~\cite{CW17, PMWJS16}, we define $F$ to be a neural network
after the softmax layer. With this notation, the final classification is then
$C_F(\bfx) = \argmax_i F(\bfx)_i$ where $F(\bfx)_i$ gives the confidence of the network
in classifying $\bfx$ for class $i$. We use $Z(\bfx)$ to denote part of $F$ except the softmax layer.
That is, $Z(\bfx)$ computes the \emph{logits} to be fed into the softmax function.
Let $\calC$ denote the class of all labels. A network is typically parameterized by parameters $\theta$,
and in this case we denote by $F_\theta$ to indicate that $F$ is parameterized by $\theta$.
Finally, let $p \in [0, 1]$ be a parameter, and $l \in \calC$.
A point $\bfx$ is $p$-confident for label $l$ if $F(\bfx)_l \ge p$.
We consider the following adversarial model which is implicit in several previous work
(e.g.~\cite{MMSTV17}):
\begin{definition}[\textbf{Adversarial Model}]
  \label{def:nap}
  Let $\calD$ be a data generating distribution over $\Real^d$,
  $F$ be a model, and $\calS \subseteq \Real^d$ be a set of perturbations.
  We consider the following game between an adversary and a defender:\\
  \textbf{Adversary}: the adversary draws $\bfx \sim \calD$,   produces $\bfx' = \bfx + \Delta$ for some
  $\Delta \in \calS$, and sends $\bfx'$ to the defender.\\
  \textbf{Defender}: the defender outputs a label $l \in \calC$ for $\bfx'$.\\
  The defense succeeds if $C_F(\bfx) = C_F(\bfx')$.
\end{definition}
An important point regarding this adversarial model is that it only considers points on $\calD$ or nearby
(as specified by the allowable perturbations $\calS$), instead of \emph{the entire domain}.
While seemingly one should consider every point in the space,
this definition better reflects the intuition behind adversarial perturbation problem:
that is, for ``natural images,'' the classification should be consistent
with respect to a set of ``small perturbations.''


\section{Probabilistic Separation Property and Madry et al.'s Formulation}
\label{sec:goodness-property}
This section develops the following:
\begin{itemize}
\item In Section~\ref{sec:goodness-property:formalization}, we propose a goodness property of models
  which states that \emph{confident regions of a good model should be well separated.}
  We further formalize this as a probabilistic separation property.
\item In Section~\ref{sec:goodness-property:madry}, we examine existing robust training formulations
  and demonstrate that a natural min-max formulation of~\cite{MMSTV17} encourages very good separation
  of $p$-confident points of different classes for large $p$, but has a weak control of low-confidence
  wrong predictions due to estimation error.
\end{itemize}

\subsection{Probabilistic Separation Property}
\label{sec:goodness-property:formalization}
The manifold assumption in unsupervised and semi-supervised learning states that
\emph{natural data points lie on (or near to) separate low dimensional manifolds for different classes.}
Under this assumption what would an ideal model look like?
Clearly, we would expect that an ideal model can \emph{confidently} classify points from the manifolds,
while \emph{not} claiming confidence for points that are far away from those manifold.
Therefore, since by assumption low dimensional manifolds are separate from each other,
we would expect that confident correct predictions of different classes are separate from each other.
Therefore, we propose the following goodness property
\begin{center}
  \emph{Confident predictions of different classes should be well separated.}
\end{center}
We formalize this as the following property:
\begin{definition}[$(p, q, \delta)$-separation]
  \label{def:p-q-delta-separation}
  Let $\calD$ be a data generating distribution,
  $p, q \in [0, 1]$, $\delta \ge 0$, and $d(\cdot, \cdot)$ be a distance metric.
  Let $\calB$ be the event $\{ \exists y' \neq y, \bfx' \in N(\bfx, \delta), F_\theta(\bfx')_{y'} \ge p \}$.
  $F$ is said to have $(p, q, \delta)$-separation if $\Pr_{(\bfx,y) \sim \calD}\big[\calB\big] \le q,$
  where $N(\bfx, \delta) = \{ \bfx'\ |\ d(\bfx, \bfx') \le \delta \}$.
\end{definition}
This definition says that for a point $(\bfx, y)$ sampled from the data generating distribution,
in its neighborhood there should \emph{not} be \emph{confident} wrong predictions
(though wrong model predictions with low confidence may still exist).
Thus, if a model satisfies this definition well, then confident predictions of different classes
must be well separated. Finally, this definition allows certain points to be ``bad.''

\subsection{An Analysis of the Min-Max Formulation of~\cite{MMSTV17}}
\label{sec:goodness-property:madry}
This section gives an analysis of the min-max formulation proposed by~\cite{MMSTV17}.
Essentially, our analysis shows that a variety of its instantiations encourage training a model
that satisfies Definition~\ref{def:p-q-delta-separation}.
More specifically, we prove two things:
(1) the formulation encourages training a model that satisfies Definition~\ref{def:p-q-delta-separation}, and
(2) complementing this, we show that the formulation may have a weak control of points with low-confidence
but wrong predictions, due to estimation error with finite samples.
To start with, Madry et al. proposes the following formulation:
\begin{align}
  \label{def:madry-objective}
  \begin{split}
    &\minimize \rho(\theta), \\
    &\text{where } \rho(\theta) 
    = \Exp_{(\bfx, y) \sim \calD}\left[\max_{\Delta \in \calS} L(\theta, \bfx + \Delta, y)\right],
  \end{split}
\end{align}
where $\calD$ is the data generating distribution, $\calS$ is set of allowed perturbations
(e.g., $\calS = \{ \Delta\ |\ \|\Delta\|_\infty \le \delta\}$),
and $L(\theta, \bfx, y)$ is the loss of $\theta$ on $(\bfx, y)$.
For the rest of the discussion, we denote
$\kappa(\theta, \bfx, y) = \max_{\Delta \in \calS}L(\theta, \bfx + \Delta, y)$,

\noindent\textbf{Analyzing a Family of Loss Functions.}
Let us consider the following family of loss functions,
\begin{align*}
  \calL = \big\{ & L(\theta, \bfx, y) \text{ where $L(\theta, \bfx, y)$ is monotonically } \\
                                & \text{decreasing in $F_\theta(\bfx)_y$}. \big\}
\end{align*}
In other words, as we have higher confidence about the correct prediction,
we have smaller loss. For $L \in \calL$, there is a monotonically decreasing
\emph{loss-lower-bound function} $\tau_L: [0, 1] \mapsto \Real^{\ge 0}$,
so that if $F_\theta(\bfx)_y \le q$, then $L(\theta, \bfx, y) \ge \tau_L(q)$.

\begin{example}[\textbf{Cross Entropy Loss}]
  Let $L(\theta, \bfx, y) = \Ent(\mathbf{1}_y, F_\theta(\bfx)) = -\log F_\theta(\bfx)_y$,
  be the cross entropy between $\mathbf{1}_y$ and $F_\theta(\bfx)$,
  where $F_\theta$ is the model instantiated with parameters $\theta$,
  and $\mathbf{1}_y$ is the indicator vector for label $y$.
  Then we can define $\tau_L$ as $\tau_L(\alpha) = -\log \alpha$.
\end{example}

With these notations we have the following proposition,

\begin{proposition}
  \label{proposition:confident-separation}
  Let $L \in \calL$, $\tau_L$ be a loss-lower-bound function,
  and $\calB$ be the event $\{(\exists y' \neq y, \bfx' \in \bfx + \calS)\ F_\theta(\bfx')_{y'} \ge p\}$.
  If $\rho(\theta) \le \varepsilon$, then
  $\Pr_{(\bfx, y) \sim \calD}\big[ \calB\big] \le \frac{\varepsilon}{\tau_L(1-p)}.$
  That is, the probability of an $\bfx' \in \bfx + \calS$ that is $p$-confident on a wrong label
  decreases as $p \rightarrow 1$.
\end{proposition}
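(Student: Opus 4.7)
The plan is to reduce the bound to a direct Markov's inequality applied to the inner-max loss $\kappa(\theta,\bfx,y)$, using the softmax property to convert a high-confidence wrong label into a low-confidence correct label, and then invoke the loss-lower-bound function $\tau_L$.

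First I would fix $(\bfx,y)$ in the event $\calB$ and pick a witness $\bfx' = \bfx + \Delta$ with $\Delta \in \calS$ and some $y' \neq y$ such that $F_\theta(\bfx')_{y'} \ge p$. Since $F_\theta$ is the post-softmax output, its coordinates sum to $1$ and are nonnegative, so $F_\theta(\bfx')_y \le 1 - F_\theta(\bfx')_{y'} \le 1-p$. Applying the loss-lower-bound property of $L \in \calL$ gives $L(\theta, \bfx', y) \ge \tau_L(F_\theta(\bfx')_y)$; since $\tau_L$ is monotonically decreasing and $F_\theta(\bfx')_y \le 1-p$, this yields $L(\theta, \bfx', y) \ge \tau_L(1-p)$. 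Because $\bfx' \in \bfx + \calS$ is a feasible choice for the inner maximization, we conclude $\kappa(\theta,\bfx,y) \ge L(\theta,\bfx',y) \ge \tau_L(1-p)$ whenever $\calB$ occurs.

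Next I would apply Markov's inequality to the nonnegative random variable $\kappa(\theta,\bfx,y)$, whose expectation is $\rho(\theta) \le \varepsilon$:
\begin{equation*}
\Pr_{(\bfx,y)\sim\calD}[\calB]
\;\le\; \Pr_{(\bfx,y)\sim\calD}\!\bigl[\kappa(\theta,\bfx,y) \ge \tau_L(1-p)\bigr]
\;\le\; \frac{\rho(\theta)}{\tau_L(1-p)}
\;\le\; \frac{\varepsilon}{\tau_L(1-p)},
\end{equation*}
which is the desired inequality. The first inequality uses the implication proved in the previous paragraph (so $\calB$ is contained in the tail event), and the second is Markov applied to $\kappa \ge 0$.

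The only real subtlety is the passage from a large value of $F_\theta(\bfx')_{y'}$ to a small value of $F_\theta(\bfx')_y$; this is where we implicitly use that $F_\theta$ ends with a softmax and therefore defines a probability vector, and that $\tau_L$ is nonincreasing so the bound $F_\theta(\bfx')_y \le 1-p$ translates cleanly through $\tau_L$. Once that is in place, everything else is just unwrapping the definition of $\rho(\theta)$ and invoking Markov. No concentration, no regularity of $F_\theta$, and no properties of $\calS$ beyond $\bfx' - \bfx \in \calS$ are needed.
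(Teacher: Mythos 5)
Your proof is correct and follows essentially the same route as the paper's: infer $F_\theta(\bfx')_y \le 1-p$ from the softmax normalization, lower-bound $\kappa(\theta,\bfx,y)$ by $\tau_L(1-p)$ on the event $\calB$, and finish with Markov's inequality applied to $\kappa \ge 0$. You actually spell out the softmax step that the paper leaves implicit, which is a nice touch; no gaps.
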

\begin{proof}
  Let $\alpha = 1 - p$. If $\calB$ happens, then $F_\theta(\bfx')_y \le \alpha$ for some $\bfx' \in \bfx + \calS$,
  and $\kappa(\theta, \bfx, y)  \ge \tau_L(\alpha)$, therefore\footnote{
    Let $X$ be a nonnegative random variable and $a > 0$, Markov's inequality says that
    $\Pr[X \ge a] \le \Exp[X]/a$.},
  \begin{align*}
    \Pr_{(\bfx,y) \sim \calD}[\calB]
    \le &\Pr_{(\bfx, y) \sim \calD}\big[\kappa(\theta, \bfx, y) \ge \tau_L(\alpha)\big] \\
    \le &\frac{\Exp[\kappa(\theta, \bfx, y)]}{\tau_L(\alpha)} \qquad\qquad \text{
          (Markov's inequality)}\\
    \le &\frac{\varepsilon}{\tau_L(\alpha)}.
  \end{align*}
  The proof is complete.
\end{proof}
This immediately generates the following corollary,
\begin{corollary}
  Let $\calS$ be a region defined as $\{ \Delta\ |\ d(\Delta, {\bf 0}) \le \delta \}$.
  If $\rho(\theta) \le \varepsilon$, then the model $F_\theta$ is
  $(p, \frac{\varepsilon}{\tau_L(1-p)}, \delta)$-separated.
\end{corollary}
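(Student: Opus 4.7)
The plan is to derive the corollary as an essentially immediate consequence of Proposition~\ref{proposition:confident-separation}, once the two notions of ``neighborhood'' used in the two statements are identified. The proposition controls the probability of the event $\{\exists y'\neq y,\ \bfx' \in \bfx+\calS,\ F_\theta(\bfx')_{y'} \ge p\}$, whereas the definition of $(p,q,\delta)$-separation controls the probability of the event $\{\exists y'\neq y,\ \bfx' \in N(\bfx,\delta),\ F_\theta(\bfx')_{y'} \ge p\}$. Both are probabilities under $(\bfx,y)\sim\calD$, the bad label $y'$ ranges over the same set in both, and the confidence threshold is the same, so the whole task is to check that under the hypothesis $\calS = \{\Delta : d(\Delta,\mathbf{0})\le \delta\}$ the two sets of candidate $\bfx'$ coincide.

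First I would unfold the set $\bfx+\calS$: by definition it equals $\{\bfx+\Delta : d(\Delta,\mathbf{0})\le\delta\}$, and reparametrizing by $\bfx' = \bfx+\Delta$ this becomes $\{\bfx' : d(\bfx'-\bfx,\mathbf{0})\le\delta\}$. Using translation invariance of $d$ (which holds for any metric induced by a norm, and in particular for the $\ell_p$ distances the paper cares about) this is exactly $\{\bfx' : d(\bfx,\bfx')\le\delta\} = N(\bfx,\delta)$. Hence the two events $\calB$ are literally the same event, and Proposition~\ref{proposition:confident-separation} immediately yields
\begin{equation*}
\Pr_{(\bfx,y)\sim\calD}\bigl[\exists y'\neq y,\ \bfx'\in N(\bfx,\delta),\ F_\theta(\bfx')_{y'}\ge p\bigr] \;\le\; \frac{\varepsilon}{\tau_L(1-p)}.
\end{equation*}
By Definition~\ref{def:p-q-delta-separation}, this is exactly the statement that $F_\theta$ is $(p,\frac{\varepsilon}{\tau_L(1-p)},\delta)$-separated.

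There is essentially no hard step here; the only thing to be careful about is the implicit translation-invariance assumption on $d$, which matches the standard setting used throughout the paper (e.g., $\ell_\infty$ perturbations). I would mention this one-line identification and then simply cite the proposition to close the argument.
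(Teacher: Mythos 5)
Your proof is correct and matches the paper's intent exactly: the paper treats this corollary as an immediate consequence of Proposition~\ref{proposition:confident-separation}, and your identification of $\bfx+\calS$ with $N(\bfx,\delta)$ (via translation invariance of $d$) is precisely the one-line bridge needed. The extra care you take in flagging the translation-invariance assumption is a reasonable pedantic addition but does not change the argument.
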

This indicates that even if a model is only a somewhat good solution to (\ref{def:madry-objective}),
meaning $\rho(\theta) \le \varepsilon$ for only a somewhat small $\varepsilon$,
then $p$-confident points will be well separated
(in the sense of Definition~\ref{def:p-q-delta-separation}) as soon as $p$ increases.

The above proposition considers a situation where we have a confident but wrong point
(with confidence $p$). What about points that have wrong but low-confidence predictions?
For example, the confidence on the wrong label is only $\frac{1}{2} + \nu$ for some small $\nu$?
Note that by setting $p = 1/2$ we derive immediately a bound $\varepsilon/\tau_L(1/2)$
(which can be much weaker than $\varepsilon/\tau_L(1-p)$ for large $p$).
We note that this bound is tight without further assumptions:

\begin{proposition}
  \label{proposition:unconfident-separation}
  Let $F_\theta$ be a neural network parameterized by $\theta$,
  and $C_{F_\theta}$ be the corresponding classification network. If $\rho(\theta) \le \varepsilon$, then
  $\Pr_{(\bfx, y) \sim \calD}\big[(\exists y' \neq y, \bfx' \in \bfx + \calS),\ C_{F_\theta}(\bfx')
    = y'\big] \le \frac{\varepsilon}{\tau_L(1/2)}.$ The bound is tight.
\end{proposition}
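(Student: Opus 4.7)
The plan is to mirror the proof of Proposition~\ref{proposition:confident-separation}, replacing the event ``$p$-confident wrong prediction exists in $\bfx+\calS$'' with the weaker event ``$C_{F_\theta}$ is wrong somewhere in $\bfx+\calS$.'' The deterministic fact that lets the same Markov argument go through is: whenever $C_{F_\theta}(\bfx') = y' \neq y$, we have $F_\theta(\bfx')_y \le 1/2$. Indeed, $y'$ being an $\argmax$ forces $F_\theta(\bfx')_{y'} \ge F_\theta(\bfx')_y$; since the entries of $F_\theta(\bfx')$ are nonnegative and sum to one, the assumption $F_\theta(\bfx')_y > 1/2$ would leave strictly less than $1/2$ mass for every other class, contradicting $F_\theta(\bfx')_{y'} \ge F_\theta(\bfx')_y > 1/2$.

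With this fact in hand, the upper bound is a direct translation of the argument for Proposition~\ref{proposition:confident-separation}. On the event that some $\bfx' \in \bfx + \calS$ is misclassified, the loss-lower-bound property of $L \in \calL$ gives $L(\theta,\bfx',y) \ge \tau_L(1/2)$, hence $\kappa(\theta,\bfx,y) \ge \tau_L(1/2)$. Applying Markov's inequality to the nonnegative random variable $\kappa(\theta,\bfx,y)$, whose expectation equals $\rho(\theta) \le \varepsilon$, yields the claimed bound $\varepsilon/\tau_L(1/2)$.

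For tightness, I would exhibit a family of instances that saturates (or arbitrarily approaches) the inequality. A minimal construction suffices: let $\calD$ be an atomic distribution placing mass $\varepsilon/\tau_L(1/2)$ on a ``hard'' labeled point and the remaining mass on an ``easy'' point, and design $F_\theta$ and $\calS$ so that on the easy point the worst-case loss is $0$, while on the hard point the inner maximizer $\bfx^*_\Delta$ achieves $F_\theta(\bfx^*_\Delta)_y = 1/2$ with some other class $y'$ tying or narrowly exceeding on the other half of the mass. Then $\Exp[\kappa] = \varepsilon$ exactly, while the probability of a misclassified perturbation equals $\varepsilon/\tau_L(1/2)$. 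The main (minor) obstacle is the argmax tie-breaking convention: if ties break toward the true label $y$, one perturbs the construction by an infinitesimal $\zeta > 0$ so that $F_\theta(\bfx^*_\Delta)_{y'} = 1/2+\zeta$ and $F_\theta(\bfx^*_\Delta)_y = 1/2-\zeta$, recovering tightness in the limit $\zeta \to 0^+$ by continuity of $\tau_L$.
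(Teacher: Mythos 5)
Your proposal is correct and follows essentially the same route as the paper: the key step in both is that a misclassified $\bfx'$ forces $F_\theta(\bfx')_y \le 1/2$, hence $\kappa(\theta,\bfx,y) \ge \tau_L(1/2)$ on the bad event, after which Markov's inequality (equivalently, the paper's two-case expectation lower bound) gives the claim. Your tightness construction just spells out in detail what the paper dismisses with ``we can force equality for each of the inequalities.''
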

\begin{proof}
  Let $\calB$ be the event $\{(\exists y' \neq y, \bfx' \in \bfx + \calS)\ C_{F_\theta}(\bfx') = y'\}$.
  If $\calB$ happens then $F_\theta(\bfx')_y \le \frac{1}{2}$ (otherwise $\bfx'$ will be classified as $y$),
  and so $\kappa(\theta, \bfx, y) \ge \tau_L(1/2)$. On the other hand,
  if $\calB$ does not happen, then we can lower bound $\kappa(\theta, \bfx, y)$ by $0$.
  Therefore $\varepsilon \ge \Exp[\kappa(\theta, \bfx, y)] \ge \Pr[\neg\calB]
  \cdot 0 + \Pr[\calB] \cdot \tau_L(1/2) = \Pr[\calB] \cdot \tau_L(1/2)$.
  Tightness follows as we can force equality for each of the inequalities. The proof is complete.
\end{proof}

\noindent\textbf{Instantiations.}
The above two propositions can be specialized to many different forms of $L$. Basically, any loss function
that encourages high confidence on the correct label, when plugged into (\ref{def:madry-objective}),
encourages that there is no wrong prediction with high confidence. We list a few of them:
\begin{itemize}
\item \textbf{\em Cross Entropy Loss}. If we use $L(\theta, \bfx, y) = -\log F(\bfx)_y$, we immediately
  get separation guarantee $-\frac{\rho(\theta)}{\log(1-p)}$
  in Proposition~\ref{proposition:confident-separation}.
\item \textbf{\em Squared Loss}. It is also natural to consider $L(\theta, \bfx, y) = (1 - F_\theta(\bfx)_y)^2$.
  This then gives separation guarantee $\rho(\theta)/p^2$. Note that this guarantee is weaker than
  the one above with cross entropy: As $p \rightarrow 1$, the probability of bad events happening
  only converges to $\rho(\theta)$, instead of $0$.
\item \textbf{\em Entropy Regularization}. In~\cite{MMKI17} the authors proposed to use the entropy
  of $F(\bfx)$, $\Ent(F(\bfx))$, as a regularizer. We note that such regularization is compatible with
  our argument here. Essentially as one increases $F(\bfx)_y$, the entropy decreases.
\item \textbf{\em Loss functions in Carlini-Wagner Attacks.}
  It is also not hard to check that any objective function used for targeted attacks in~\cite{CW17}
  (see Objective Function on pp. 6 of their paper) can be adapted as loss function $L$ in the
  min-max paradigm to draw similar conclusions.
\end{itemize}

Contrasting Proposition~\ref{proposition:confident-separation}
and~\ref{proposition:unconfident-separation}, we note the following:
\begin{itemize}
\item We note that in reality we only have a finite-sample approximation for (\ref{def:madry-objective}).
  Therefore, due to estimation error of solving the objective,
  even though one can hope for a good separation between $p$-confident points from different classes
  for large $p$ (by Proposition~\ref{proposition:confident-separation}),
  there may still be many \emph{low-confidence wrong} predictions
  (by Proposition~\ref{proposition:unconfident-separation}).

\item Our analysis indicates that even if the model is only a somewhat good solution to Madry et al.'s
  formulation, confidence gives additional information, and is provably a \emph{discriminator} which
  can distinguish between right and wrong predictions in a neighborhood of a point.
  Therefore in this situation, one can leverage confidence to \emph{reinforce robustness of the model}.
  For example, we can now try to apply confidence to protect a model with good separation property,
  from making wrong predictions, by \emph{rejecting adversarial points.}
  Specifically, with an appropriately chosen threshold $p_0$, we can modify the model so that
  we output $\bot$ if $\|F(\bfx)\|_\infty < p_0$, otherwise we output $C_F(\bfx)$.
  In this way, we hope that we can correctly predict on natural points
  while rejecting adversarial points. In the next section, we investigate further combining
  confidence and search to correct adversarial examples.
\end{itemize}


\section{From Confidence to Defenses}
\label{sec:defense}
Our analysis from the previous section shows that there can be good models with well-separated confident
regions,yet there may be a weak control of points that have wrong predictions but low model confidence.
However, since now right and wrong predictions are distinguishable by confidence,
one can try to correct an adversarial point by embed it back to high-confidence regions.
In the following we establish embedding and end-to-end instantiations.

\noindent\textbf{Embedding Objectives.}
We propose a framework, Highly Confident Near Neighbor ($\HCNN$), for embedding:
\begin{definition}[\textbf{Highly Confident Near Neighbor}]
  Let $F$ be a model, $\|\cdot\|, |\cdot|$ be two possibly different norms,
  and $\xi > 0, \lambda \ge 0$ be two real numbers.
  The \emph{$\xi$-Highly Confident Near Neighbor} objective (or simply $\HCNN_\xi$) computes:
  \begin{align}
    \label{eq:mcnn-def}
    \HCNN_\xi(F, \bfx) \equiv \argmax_{\bfz \in N(\bfx, \xi)}(\|F(\bfz)\|_\infty - \lambda|\bfz - \bfx|).
  \end{align}
  where $N(\bfx, \xi)$ is the $\xi$-ball around $\bfx$ with respect to $\|\cdot\|$.
\end{definition}
In short, in a $\xi$-ball centered at $\bfx$, this objective tries to encourage finding a point which
simultaneously has a high-confidence prediction, and is near to $\bfx$. We have two immediate variants.
First, by setting $\lambda = 0$, we have $\xi$-Most Confident Neighbor ($\MCN_\xi$):
\begin{definition}[\textbf{Most Confident Neighbor}]
  Let $F$ be a model, $\|\cdot\|$ be a norm, and $\xi > 0$ be a real number.
  The \emph{$\xi$-Most Confident Neighbor} objective (or simply $\MCN_\xi$) computes:
  \begin{align}
    \label{eq:mcn-def}
    \MCN_\xi(F, \bfx) \equiv \argmax_{\bfz \in N(\bfx, \xi)}\|F(\bfz)\|_\infty.
  \end{align}
  where $N(\bfx, \xi)$ is the $\xi$-ball around $\bfx$ with respect to $\|\cdot\|$.
\end{definition}
Second, by putting a threshold on $\|F(\bfz)\|_\infty$ and minimizing $|\bfz - \bfx|$ alone,
we have $p$-Nearest Confident Neighbor:
\begin{definition}[\textbf{Nearest Confident Neighbor}]
  Let $F$ be a model, $\|\cdot\|, |\cdot|$ be two possibly different norms, and $p \in (0, 1)$ be a real number.
  The $p$-\emph{Nearest Confident Neighbor} objective (or simply $\NCN_p$) computes:
  \begin{align}
    \label{eq:ncn-def}
    \begin{split}
      \NCN_p(F, \bfx) \equiv &\argmin_{\bfz \in N(\bfx, \xi)}|\bfz - \bfx|, \\
      &\text{ subject to } \|F(\bfz)\|_\infty \ge p.
    \end{split}
  \end{align}
  where $N(\bfx, \xi)$ is the $\xi$-ball around $\bfx$ with respect to $\|\cdot\|$.
\end{definition}


\begin{algorithm}[htbp]
  \caption{Solving $\HCNN_\xi$ by solving for each label.}
  \label{alg:oracle-hcnn}
  \begin{algorithmic}[1]
    \Input{
      $\bfx$ a feature vector,
      $\xi > 0$ a real parameter,
      $\lambda \ge 0$ a real parameter,
      a base model $F$,
      any gradient-based optimization algorithm $\cal O$ to solve
      the constrained optimization problem defined in (\ref{eq:mcnn-label}).}
    \Function{$\OracleHCNN$}{$\bfx, \xi, F$}
      \For{$l \in \calC$}
         \Let{$\bfz^{(l)}$}{${\calO}(\bfx, F, l)$}
      \EndFor
      \State \Return $\bfz^{(l^*)}$ where
      $$l^* = \argmax_{l \in \calC} \left(F(\bfz^{(l)})_l - \lambda|\bfz^{(l)} - \bfx|\right).$$
    \EndFunction
  \end{algorithmic}
\end{algorithm}

\noindent\textbf{Algorithms.}
For the rest of the paper we focus on solving $\HCNN_{\xi}$ (and hence $\MCN_\xi$ as well).
To start with, we note that the optimization for solving $\HCNN_\xi$ is nothing
but for each label $l \in \calC$ we try to find
\begin{align}
  \label{eq:mcnn-label}
  \bfz^{(l)} = \argmax_{\bfz \in N(\bfx, \xi)} (F(\bfz)_l - \lambda|\bfz - \bfx|)
\end{align}
and then compute $\bfz^{(l^*)}$ for $l^* = \argmax_{l} F(\bfz^{(l)})_l - \lambda|\bfz^{(l)} - \bfx|$.
(\ref{eq:mcnn-label}) can be solved using any preferred gradient-based optimization
(e.g., projected gradient descent~\cite{NW06}). This gives Algorithm~\ref{alg:oracle-hcnn}.

\noindent\textbf{\em Attacks as Defenses.}
Note that solving (\ref{eq:mcnn-label}) is similar to \emph{a targeted adversarial attack}
which modifies $\bfx$ to increase the confidence on target label $t$,
while maintaining small distance to $\bfx$. Thus {\em an adversarial attack},
such as those proposed in~\cite{CW17}, can be used as $\calO$.

\noindent\textbf{\em Entropy Approximations.}
Algorithm~\ref{alg:oracle-hcnn} may be inefficient. To mitigate this issue, one can replace
$\|F(\cdot)\|_\infty$ by entropy functions, which are then differentiable and so
we can directly apply gradient descent. We have the following:

\noindent\emph{Shannon entropy approximation}. Approximate $\|F(\bfz)\|_\infty$ by
  $\Ent(F(\bfz)) = -\sum_{l \in \calC}F(\bfz)_l\log F(\bfz)_l.$

\noindent\emph{$\alpha$-R\'{e}nyi entropy approximation}. For $\alpha \ge 0, \alpha \neq 1$,
approximate $\|F(\bfz)\|_\infty$ by $\Ent_{\alpha}(F(\bfz)) = \frac{\alpha}{1-\alpha}\log(\|F(\bfz)\|_\alpha),$
where $\|F(\bfz)\|_\alpha$ is the $\alpha$-norm of $F(\bfz)$.

Essentially, these approximations share some of the most important optima as $\|F(\bfz)\|_\infty$:
for example if the probability vector $F(\bfz)$ concentrates on one coordinate.

\noindent\textbf{Putting Things Together.}
Let $F$ be a base model, $H_\xi$ be an embedding algorithm (e.g. $\OracleHCNN$).
Then the end-to-end model is $\Gamma(\bfx) = F(H_\xi(\bfx, F))$,
for some appropriately chosen $\xi$. We next show how to set parameters ($\xi, \delta$, etc.).
Consider an adversary who wants to perform an $\|\cdot\|$-attack with parameter $\eta$.
Suppose that we decide to use $\HCNN$ with parameter $\xi$.
Note that the guarantee that confidence being a discriminator
(Proposition~\ref{proposition:confident-separation}) holds in a neighborhood of radius $\delta$.
Thus to have the guarantee hold in the search of radius $\xi$, we must have $\delta \ge \xi + \eta$.

Suppose that $F$ satisfies $(p, q, \delta)$-separation. We say that $(\bfx, y) \sim \calD$ is
\emph{$(p, \delta)$-good} if $\{ (\forall y' \neq y, \bfx' \in N(\bfx, \delta)), F(\bfx')_{y'} < p \}$.
We note that $\Gamma_\xi^\MCN(\cdot) \equiv F(\MCN_\xi(\cdot))$ will output a correct prediction at a point
$\bfz \in N(\bfx, \delta)$, if $\bfz$ satisfies the following definition,
\begin{definition}[\textbf{$(p, \MCN_\xi)$-goodness}]
  A point $\bfz$ is said to be $(p, \MCN_\xi)$-good if there is a point in $N(\bfz, \xi)$
  that is $p$-confident for the correct label.
\end{definition}

We can further show that $\Gamma^\MCN_\xi$ improves separation:
\begin{proposition}[\textbf{$\MCN_\xi$ improves separation property}]
  \label{proposition:mcn-improves-separation}
  Suppose that $F$ satisfies $(p, q, \delta)$-separation. Let $\eta, \xi$ satisfy that
  $\eta + \xi \le \delta$. If for every $(p, \delta)$-good point $(\bfx, y)$, we have that every point
  $\bfz \in N(\bfx, \eta)$ is $(p, \MCN_\xi)$-good, then $\Gamma_\xi^\MCN$ satisfies $(1-p, q, \eta)$
  separation.
\end{proposition}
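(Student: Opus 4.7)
The plan is to reduce the desired $(1-p, q, \eta)$-separation of $\Gamma^\MCN_\xi$ to the $(p, q, \delta)$-separation already enjoyed by $F$, by showing that conditioned on a $(p,\delta)$-good draw $(\bfx,y)$, the embedded model $\Gamma^\MCN_\xi$ cannot place mass $\ge 1-p$ on any wrong label anywhere in $N(\bfx,\eta)$. Concretely, let $\calG$ be the event that $(\bfx,y)$ is $(p,\delta)$-good. By $(p,q,\delta)$-separation of $F$, $\Pr[\neg\calG] \le q$, so it suffices to prove the implication: on $\calG$, for every $\bfx' \in N(\bfx,\eta)$ and every $y' \ne y$, $F(\MCN_\xi(\bfx'))_{y'} \le 1-p$.

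Fix such an $\bfx'$. First I would invoke the $(p, \MCN_\xi)$-goodness hypothesis to extract a witness $\bfw \in N(\bfx',\xi)$ with $F(\bfw)_y \ge p$; since $\MCN_\xi(\bfx')$ is by definition a maximizer of $\|F(\cdot)\|_\infty$ over $N(\bfx',\xi)$, it follows that $\|F(\MCN_\xi(\bfx'))\|_\infty \ge F(\bfw)_y \ge p$. Next, by the triangle inequality and $\eta + \xi \le \delta$, the point $\MCN_\xi(\bfx')$ lies in $N(\bfx,\eta+\xi) \subseteq N(\bfx,\delta)$. Because $(\bfx,y)$ is $(p,\delta)$-good, \emph{no} label $y' \ne y$ can achieve confidence $\ge p$ at any point of $N(\bfx,\delta)$; hence the coordinate attaining $\|F(\MCN_\xi(\bfx'))\|_\infty \ge p$ must be the correct one, i.e. $F(\MCN_\xi(\bfx'))_y \ge p$. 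Since $F(\cdot)$ is a probability vector summing to $1$, this forces $F(\MCN_\xi(\bfx'))_{y'} \le 1 - p$ for every $y' \ne y$, which is exactly what we needed.

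Combining the two parts, the bad event for $\Gamma^\MCN_\xi$ is contained in $\neg\calG$, so its probability is at most $q$, yielding $(1-p, q, \eta)$-separation.

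I don't expect a serious obstacle here: the argument is essentially a triangle-inequality chase plus the observation that $\MCN_\xi$ cannot ``escape'' into a high-confidence wrong region without contradicting $F$'s own separation. The one delicate point worth flagging is the boundary case: the conclusion I derive is $F(\MCN_\xi(\bfx'))_{y'} \le 1-p$, whereas the separation definition's bad event is $\ge 1-p$. Equality can arise only in the degenerate configuration where the correct label has confidence exactly $p$ and all remaining mass collapses onto a single wrong label, matching the analogous measure-zero artifact already present in the $(p,q,\delta)$-separation definition. I would either note this alignment explicitly or, equivalently, sharpen $(p,\delta)$-goodness to the strict inequality $F(\bfx')_{y'} > p$ — a cosmetic adjustment that preserves all earlier statements and removes the ambiguity.
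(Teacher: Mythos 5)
Your proof is correct and follows essentially the same route as the paper's: condition on the $(p,\delta)$-good event (probability $\ge 1-q$ by $F$'s separation), then use $(p,\MCN_\xi)$-goodness plus the inclusion $N(\bfx',\xi)\subseteq N(\bfx,\eta+\xi)\subseteq N(\bfx,\delta)$ to force the maximizing coordinate of $F(\MCN_\xi(\bfx'))$ onto the correct label, and conclude via the probability vector summing to $1$. You actually spell out two steps the paper leaves implicit (the triangle-inequality containment and why the $\ell_\infty$-maximizing coordinate must be $y$), and your flag about the boundary case ($\le 1-p$ versus the strict $<1-p$ the paper asserts) is a legitimate, if cosmetic, observation.
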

\begin{proof}
  By contraposition it suffices to prove the following
  \begin{align*}
    &\Pr_{(\bfx, y) \sim \calD}\big[
    (\forall y' \neq y, \bfx' \in N(\bfx, \eta)), \Gamma_\xi^\MCN(\bfx')_{y'} < 1-p
      \big] \\
    &\ge 1-q.
  \end{align*}
  By assumption that $F$ satisfies $(p, q, \delta)$-separation, with probability at least $1-q$ that
  $(\bfx, y) \sim \calD$, $(\bfx, y)$ is $(p, \delta)$-good. For every such $(p, \delta)$-good point
  $(\bfx, y)$, by assumption, every $\bfz \in N(\bfx, \eta)$ is $(p, \MCN_\xi)$-good.
  Therefore for every such $\bfz$, $\Gamma^\MCN_\xi(\bfz)_y \ge p$, and so
  $(\forall y' \neq y), \Gamma_\xi^\MCN(\bfz)_{y'} < 1-p$. The proof is complete.
\end{proof}
That is, $\Gamma_\xi^\MCN$ satisfies a much stronger separation property,
though in a smaller $\eta$-neighborhood, in the sense that there is no point in the $\eta$-neighborhood
at which $\Gamma_\xi^\MCN$ will assign confidence $\ge 1-p$ to a wrong label.

Finally, as it turns out in our experiments, there are nontrivial settings of $\delta, \eta, \xi$
under which there is a significant improvement of model robustness over the base model.


\section{Empirical Study}
\label{sec:experiments}
In this section we perform a detailed empirical study of our method.
We call a model ``natural'' if it is trained in the usual way without considering adversarial robustness,
and a model ``adversarially trained'' if it is trained using the paradigm specified in
(\ref{def:madry-objective}). A key objective of the empirical study is to compare the behavior of confidence
of adversarially trained models and that of natural models, as our analysis predicts that confidence will act
as a discriminator between right and wrong predictions for adversarially trained models.
We ask the following questions:
\begin{enumerate}
\item \emph{Does an adversarially trained model satisfy well our probabilistic
    separation property (Definition~\ref{def:p-q-delta-separation})?
  }
\item \emph{Is confidence information effective to help rejecting adversarial examples
    (e.g., by returning $\bot$), while retaining generalization on points sampled from the
    underlying distribution?}
\item \emph{Is $\HCNN_\xi$ effective in defending against adversarial perturbations,
    while retaining generalization capability of the adversarially trained base model?}
\end{enumerate}

\noindent\textbf{Overall Setup.}
We study the above questions using $\ell_\infty$ attacks over CIFAR10~\cite{K09}.
We denote by $\eta$ the radius parameter of $\ell_\infty$ attacks.
We reuse the ResNet model trained by~\cite{MMSTV17} as a representative of an adversarially trained model.
This model is trained with respect to an $\ell_\infty$-adversary with parameter
$\delta = 8/256 \approx 0.030$. We denote by $\xi$ the radius parameter of $\HCNN$.

\noindent\textbf{Attacks and Gradient Masking.}
We note that $\HCNN$ may induce an effect of \emph{gradient masking}~\cite{ACW18},
and thus may be susceptible to attacks that try to intentionally bypass gradient masking.
Taking this into account, we modified existing attacks, including the CW attack
in~\cite{CW17}, and the PGD attack~\cite{MMSTV17}, to \emph{exploit confidence information}.
Basically, these modified attacks find adversarial points \emph{with high confidence},
in order to break our defense method which relies on confidence as a discriminator.

\noindent\textbf{Summary of Findings.}
Our main findings are as follows: {\bf (1)} The behavior of confidence of adversarially trained models,
as predicted by our analysis, is significantly better than natural models in distinguishing right and
wrong predictions. In particular, within small radius, confidence information of Madry et al.'s model gives
a good discriminator to improve adversarial robustness. 
{\bf (2)} Unfortunately, confidence of the adversarially trained mode by~\cite{MMSTV17} loses control
as one approaches the $.03$ boundary. We give further reasoning and argue that
modifications to Madry et al.'s formulation seem necessary in order to defend against
$.03$-$\ell_\infty$ attacks effectively.

\begin{table}[bth]
  \centering
  \renewcommand{\arraystretch}{1.2}
  \begin{tabular}{c|c|c}
    \hline
    & Adv. trained model   & Natural model  \\ \hline
    \# adv. points  & 26  & 277  \\ \hline
  \end{tabular}
  \caption{Results for testing separation property. We randomly sample $300$ points.
    For each column, we report number of samples that we can find attacks of confidence at least $0.9$.
  }
  \label{table:goodness-results}
\end{table}

\begin{table*}[!bth]
  \centering
  \renewcommand{\arraystretch}{1.2}
  \begin{tabular}{c|c|c|c}
    & AWPR (original) & AWPR (w/ rejection) & Recall of $\calN$ \\ \hline
    $\eta=.010, p=.90$ & 11\% & 2.2\% & 86.7\% \\ \hline
    $\eta=.010, p=.95$ & 11\% & 1.1\% & 82.5\% \\ \hline
    $\eta=.010, p=.99$ & 11\% & 0.0\% & 72.6\% \\ \hline
    $\eta=.015, p=.90$ & 14\% & 5.4\% & 89.6\% \\ \hline
    $\eta=.015, p=.95$ & 14\% & 2.9\% & 85.6\% \\ \hline
    $\eta=.015, p=.99$ & 14\% & 0.3\% & 75.9\% \\ \hline
    $\eta=.020, p=.90$ & 21.7\% & 14.7\% & 90.0\% \\ \hline
    $\eta=.020, p=.95$ & 21.7\% & 9\% & 85.7\% \\ \hline
    $\eta=.020, p=.99$ & 21.7\% & 3.8\% & 75.4\% \\ \hline
    $\eta=.030, p=.90$ & 34.7\% & 29.7\% & 90.0\% \\ \hline
    $\eta=.030, p=.95$ & 34.7\% & 26.8\% & 86.0\% \\ \hline
    $\eta=.030, p=.99$ & 34.7\% & 16.0\% & 76.5\% \\ \hline
  \end{tabular}
  \caption{Results on rejecting adversarial examples with an adversarially trained model.
    We report adversarially-wrong-prediction-ratio (AWPR) before applying rejection rule,
    AWPR after applying rejection rule, and the recall of natural points.
  }
  \label{table:rejecting-adv}
\end{table*}

\begin{table*}[!bth]
  \centering
  \renewcommand{\arraystretch}{1.2}
  \begin{tabular}{c|c|c|c}
    & AWPR (original) & AWPR (w/ rejection) & Recall of $\calN$ \\ \hline
    $\eta=.010, p=.90$ & 99.3\% & 99.1\% & 97.2\% \\ \hline
    $\eta=.010, p=.95$ & 99.3\% & 99.1\% & 96.2\% \\ \hline
    $\eta=.010, p=.99$ & 99.3\% & 98.9\% & 92.6\% \\ \hline
  \end{tabular}
  \caption{Results on rejecting adversarial examples with a natural model.
    We only tested $\eta=.010$, the weakest setting of $\ell_\infty$ attack,
    and AWPRs are very high, even for $p=.99$. This indicates that the behavior
    of confidence of a natural model is poor.
  }
  \label{table:rejecting-adv-nat-model}
\end{table*}

\begin{table*}[bth]
  \centering
  \renewcommand{\arraystretch}{1.2}
  \begin{tabular}{c|c|c||c|c}
    &  \multicolumn{2}{|c||}{$\eta = \xi = 0.010$}  & \multicolumn{2}{c}{$\eta = \xi = 0.015$} \\ \cline{2-5}
    & Label change & Confidence reduction & Label change & Confidence reduction \\ \hline
    1st confident point & 94  & 8621 & 114 & 8511 \\ \hline
    2nd confident point & 208 & 27   & 260 & 20   \\ \hline
    Other points      & 50  & 0    & 66  & 0    \\ \hline
    Missing           & 0   & 0    & 29  & 0    \\ \hline\hline
    Total             & 352 & 8648 & 469 & 8531 \\\hline
  \end{tabular}
  \caption{$\MCN_\xi$ results. 9000 adversarial examples are divided into \emph{label change} (column 1 and 3)
    and \emph{confidence reduction} (column 2 and 4). Row 1 and 2 give the number of adversarial examples
    that $\MCN_\xi$ succeeded to narrow down to two most confident predictions over 10  perturbed points.
    Row 3 gives the number of cases where the correct label is not in the two most confident points,
    but is one of the model predictions over 10 perturbed points.
    Row 4 gives cases where all model predictions on $\MCN_\xi$ perturbed points are wrong.
  }
  \label{table:mcn-results}
\end{table*}

\noindent\textbf{Validating the Probabilistic Separation Property.}
To answer our first empirical question, we statistically estimate
$\Pr_{(x,y) \sim \calD}\big[(\exists y' \neq y, x' \in N(x, \delta)) F(x')_{y'} \ge p\big]$,
from Definition~\ref{def:p-q-delta-separation} and Proposition~\ref{proposition:confident-separation}.
We compare the \emph{adversarially trained model} in~\cite{MMSTV17} and its \emph{natural variant}.
Let $\calB$ denote the bad event $(\exists y' \neq y, x' \in N(x, \delta)) F(x')_{y'} \ge p$.
We randomly sample a batch of data points from the test set,
and compute frequency that $\calB$ happens for each batch.
To do so, we generate \emph{$p$-confident attacks} which are adversarial examples that have
model confidence at least $p$. We use a modified $\ell_\infty$-CW attack~\cite{CW17} to
search as long as possible to find $p$-confident adversarial examples that lie within the norm bound
$\eta = \delta = 0.030$. Table~\ref{table:goodness-results} summarizes the results.

With these statistics we can thus estimate $(p, q, \delta)$-separation
(Definition~\ref{def:p-q-delta-separation}) of the models in comparison.
We have (details are deferred to Appendix~\ref{sec:bound-estimation}).

\begin{proposition}[Separation from statistics]
  \label{prop:model-separatedness}
  Let $\calB$ be the event $\big\{ (\exists y' \neq y, \bfx' \in N(\bfx, \delta))
  F(\bfx')_{y'} \ge .9 \big\}$. The following two hold:
  \begin{itemize}
  \item With probability at least $.9$ the robust model in~\cite{MMSTV17} satisfies
    $\Pr_{(x,y) \sim \calD}\big[\calB\big] \le \frac{14}{75} = 0.18667\ldots$
    That is, the robust model has $(\frac{9}{10}, \frac{14}{75}, \frac{8}{255})$-separation.
  \item With probability at least $.9$ the natural model satisfies
    $\Pr_{(x,y) \sim \calD}\big[\calB\big] \ge \frac{247}{300} = 0.82333\ldots$
    That is, the natural model does \emph{not} have $(\frac{9}{10}, q, \frac{8}{255})$-separation for
    $q < \frac{247}{300}$.
  \end{itemize}
\end{proposition}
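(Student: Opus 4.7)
The plan is to view the 300 randomly sampled test points as i.i.d. draws from $\calD$, so that the indicator $Z_i = \mathbf{1}[\calB]$ for the $i$-th sample is $\mathrm{Bernoulli}(\Pr_{(\bfx,y)\sim\calD}[\calB])$. Under the working assumption that the modified $\ell_\infty$-CW attack used to produce Table~\ref{table:goodness-results} finds a $0.9$-confident adversarial example whenever $\calB$ holds, the reported counts $26$ and $277$ are realizations of $\sum_i Z_i$ for the robust and natural model respectively, giving empirical frequencies $\hat p_{\text{rob}} = 26/300$ and $\hat p_{\text{nat}} = 277/300$.

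I would then apply a one-sided Hoeffding inequality to turn each empirical frequency into a confidence bound on $\Pr[\calB]$. With $n = 300$ and additive slack $t = 30/300 = 0.1$, the failure probability on either tail is at most $\exp(-2nt^2) = e^{-6} \approx 0.0025$, comfortably below $0.1$. Hence, with probability at least $0.9$ (indeed at least $1 - e^{-6}$) over the random sample, the upper tail yields $\Pr[\calB] \le \hat p_{\text{rob}} + t = 56/300 = 14/75$ for the robust model, which is exactly $(9/10,\, 14/75,\, 8/255)$-separation; and the lower tail yields $\Pr[\calB] \ge \hat p_{\text{nat}} - t = 247/300$ for the natural model, which rules out $(9/10,\, q,\, 8/255)$-separation for every $q < 247/300$. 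One could alternatively invoke a relative Chernoff bound to tighten slightly, but the bare Hoeffding slack is already orders of magnitude below the $0.1$ confidence budget, so nothing more refined is needed.

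The main obstacle is the implicit computational assumption that the attack oracle detects every instance where $\calB$ holds. If the attack fails to find an extant $0.9$-confident perturbation for some samples, then $\hat p$ is biased downward as an estimator of $\Pr[\calB]$. This bias cuts favorably for the natural-model lower bound---any undetected attacks only make $\Pr[\calB]$ larger---so that direction is statistically safe. For the robust-model upper bound, however, the bias pushes the wrong way and must be addressed, either by appealing to the fact (noted in the experimental setup) that the modified CW attack is run ``as long as possible'' so the residual bias is small, or by explicitly widening the slack $t$ to absorb both statistical fluctuation and attack imperfection. Since the chosen $t = 0.1$ already leaves an enormous margin against the $e^{-6}$ Hoeffding tail, there is ample room to fold a reasonable attack-failure rate into the same slack without changing the stated constants.
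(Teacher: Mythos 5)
Your proof is correct and follows essentially the same route as the paper's Appendix~\ref{sec:bound-estimation}: treat the 300 sampled points as i.i.d.\ Bernoulli trials whose counts 26 and 277 give empirical frequencies, and apply a concentration inequality with additive slack $\varepsilon = 0.1$ to obtain exactly the stated constants $26/300 + 30/300 = 14/75$ and $277/300 - 30/300 = 247/300$. The only differences are that the paper invokes Chebyshev's inequality (failure probability $1/(4\varepsilon^2 t) = 1/12 \le 0.1$ at $t = 300$) where you use the stronger Hoeffding bound ($e^{-6}$), and that your caveat about the attack oracle only certifying a lower bound on the indicator of $\calB$ --- so the robust-model upper bound carries an unquantified downward bias --- is a genuine gap that the paper's own proof silently shares rather than one you introduced.
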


\noindent\textbf{Rejecting Adversarial Examples.}
To answer our second empirical question, we perform the following experiment.
We sample $1000$ points from the test set, where the base model is correct at, as a set of natural points $\calN$.
For each point $\bfx \in \calN$, we generate an adversarial example $\bfx^\star$
whose model confidence is maximal. To do so, we use a strengthened version of
the PGD attack used in~\cite{MMSTV17} (with $\ell_\infty$ radius $\eta$, $10$ random starts,
and $100$ iterations)\footnote{The PGD attack used in~\cite{MMSTV17} used $1$ random start
  and $20$ iterations on CIFAR.} to first generate, for each wrong label,
an adversarial example whose model confidence is as large as possible.
Then among all possible adversarial examples, we pick one that has the largest confidence.
We collect all the adversarial $\bfx^\star$ into $\calA$. Note that $|\calA| \le |\calN|$
because for each $\bfx \in \calN$ we can generate at most one $\bfx^\star$ (or we do not find any).
Now, for a confidence parameter $p$, we reject a point $\bfx$ as adversarial by ``$F(\bfx) < p$,''
where $F$ is the base ResNet model that is adversarially trained in~\cite{MMSTV17}.
A core metric we report is \emph{Adversarially Wrong Predictions Ratio} (AWPR),
the fraction of points where we can generate adversarially wrong predictions. Without the rejection rule,
the AWPR is $|\calA| / |\calN|$, and we hope that this ratio decreases significantly after applying
the rejection rule. Meanwhile, we also hope that the \emph{recall} of natural points is large enough
which means that we do not reject many natural points.

Table~\ref{table:rejecting-adv} and~\ref{table:rejecting-adv-nat-model} present the results
for adversarially and naturally trained models, respectively. In summary:
{\bf (1)} For adversarially trained models, rejection based on confidence gives a statistically significant
reduction in AWPR, while having a good recall of retaining natural points.
While the results are far from being perfect, they give encouraging evidence that our analysis
of Madry et al.'s formulation on confidence holds non-trivially in practice.
{\bf (2)} By contrast, rejection based on confidence has no effect at all with a natural model.
Even we set $p=0.99$, there are many adversarial examples that reach such confidence
using the weakest attack radius $.01$ in our experiments. This demonstrates that adversarial training
prepares confidence in a nontrivial way as a discriminator.
{\bf (3)} Finally, we notice that for the adversarially trained model we used,
the effectiveness of the rejection rule drops significantly as we approach the $.03$ boundary.
In particular, we are able to find many more high-confidence adversarial points there
than neighborhoods of smaller radius.

\noindent\textbf{Defending against adversarial examples using $\MCN_\xi$.}
We implemented a version of $\MCN_\xi$ and test its performance against $\ell_\infty$ attacks.
We sample a set $\calN$ of 1000 random test points where the base model produces \emph{correct} predictions.
We use PGD attacks of $1000$ iterations to increase the model confidence for target labels as much as possible.
With the attack, we first generate 9000 adversarial examples $\bfx^\star$
for each $\bfx \in \calN$ and each of the 9 incorrect target labels.
For a number of cases, the attack achieves different prediction $C_F(\bfx^\star)$ (\emph{label change}),
and for the other cases, the attack fails to change the prediction but reduces prediction confidence
of the correct label (\emph{confidence reduction}).

Our implementation of $\MCN_\xi$ solves (\ref{eq:mcnn-label}) using the PGD attack with a different setting
($\ell_\infty$ radius $\xi$, no random start, and 500 iterations).
Table~\ref{table:mcn-results} gives results where we report top-2 accuracy.
That is, let $\bfx_1, \ldots, \bfx_{10}$ be the 10 points perturbed by $\MCN_\xi$
for an input $\bfx$ with correct label $y$, sorted by the base model confidence
$\|F(\bfx_1)\|_\infty\ge,\ldots,\ge \|F(\bfx_{10})\|_\infty$.
We count the number of following two cases: {\bf (i)} $y = C_F(\bfx_1)$,
{\bf (ii)} $y \ne C_F(\bfx_1) \text{ but } y = C_F(\bfx_2)$. In summary:
{\bf (1)} $\MCN_\xi$ recovers a nontrivial number of correct predictions
from  adversarial examples with changed labels, without changing the originally correct predictions of
adversarial examples with reduced confidences. {\bf (2)} Specifically, for $\eta = \xi = 0.010$,
$\MCN_\xi$ produces correct predictions of 94 label-changed adversarial examples,
and wrong predictions for 27 confidence-reduced adversarial examples.
For $\eta = \xi = 0.015$, $\MCN_\xi$ produces correct predictions for 114 label-changed adversarial examples,
and wrong predictions for 20 confidence-reduced adversarial examples.
{\bf (3)} Interestingly, for most cases (99.44\% for $\eta = \xi = 0.010$, 98.94\%
for $\eta = \xi = 0.015$), the correct label $y$ can be predicted from the perturbed points
$\bfx_1, \bfx_2$ with two highest confidences.

\noindent\textbf{Potential Problems with Madry et al.'s Formulation.}
Our experiments demonstrate that as attack radius increases, the effectiveness of confidence decreases,
and it almost completely loses control at the $.03$ boundary. To this end, we first notice that
$\ell_\infty$ ball of radius $.03$ is very large: with an attack of radius $.03$ it is possible
to craft a good adversarial point, but one can also arrive at a almost random noise (by perturbing each
position of the input image by $.03$ for example). Given such a large ball to enforce robustness,
we observe two potential problems with Madry et al.'s formulation (\ref{def:madry-objective}):
{\bf (1)} \emph{gradient descent is not effective enough to search the space}.
We note that the inner maximum of (\ref{def:madry-objective}) is solved by gradient descent,
so probably one can only search a small neighborhood, and make confidence good there.
{\bf (2)} \emph{a more fundamental problem is that enforcing (\ref{def:madry-objective}) in a large
  $\ell_\infty$ ball is too much to hope for, because this ball contains random noise anyway.}
It seems more reasonable to us that one should only encourage (\ref{def:madry-objective})
in a relatively small neighborhood (say $.01$), which builds a \emph{trusted region},
and then encourage \emph{low-confidence} outside the neighborhood, so following confidence information
one can hope to go back to trusted regions.


\section{Related Work}
\label{sec:related}
\cite{SZSBEGF13} first observed the susceptibility of deep neural networks to adversarial perturbations.
Since then, a large body of work have been devoted to study hardening neural networks for this problem
(a small set of work in this direction is~\cite{GSS14, PMWJS16, MMKI17}).
Simultaneously, another line of work have been devoted to devise more effective or efficient attacks
(a small set of work in this direction is~\cite{MDFF16, PMJ+16, CW17}).
Unfortunately, there is still a large margin to defend against more sophisticated attacks,
such as Carlini-Wagner attacks~\cite{CW17}. For example, while the recent robust residual network constructed
by~\cite{MMSTV17} achieves encouraging robustness results on MNIST,
on CIFAR10 the accuracy against a strong adversary can be as low as $45.8\%$.

A line of recent research investigated using a generative model to ``explicitly recognizing''
natural manifolds to defend against adversarial perturbations.
Examples of this include the robust manifold approach by~\cite{IJADD17},
and an earlier proposal of MagNet by~\cite{MC17}. Similar to our work,
there is a discriminator that can distinguish between right and wrong predictions.
However, there is a crucial difference: In generative model approach, one tries to distinguish
between points on the manifolds and those that are not, and thus only \emph{indirectly} discriminate between
right and wrong predictions. By contrast, our confidence discriminator \emph{directly} discriminates
between right and wrong predictions. Therefore, one can think of our approach as
a ``discriminative'' alternative to the ``generative'' approach studied in robust manifold approach and MagNet,
and thus seems easier to achieve. In fact, MagNet has already been successfully attacked by~\cite{CW17-2}
by exploiting gaps between the generative discriminator and the underlying classifier.


\section{Conclusion}
\label{sec:conclusion}
In this work we take a first step to study structural information induced by an adversarial training to
further reinforce adversarial robustness. We show that confidence is a provable property, once one solves
the adversarial training objective of Madry et al. well, to distinguish between right and wrong predictions.
We empirically validate our analysis and report encouraging results. Perhaps more importantly, our analysis
and experiments also point to potential problems of Madry et al.'s training, which we hope,
may stimulate further research in adversarial training.


\section{Acknowledgments}
This work was partially supported by ARO under contract number W911NF-1-0405.

{\small
\bibliographystyle{natbib}
\bibliography{paper}

\begin{thebibliography}{15}
\providecommand{\natexlab}[1]{#1}
\providecommand{\url}[1]{\texttt{#1}}
\expandafter\ifx\csname urlstyle\endcsname\relax
  \providecommand{\doi}[1]{doi: #1}\else
  \providecommand{\doi}{doi: \begingroup \urlstyle{rm}\Url}\fi

\bibitem[Athalye et~al.(2018)Athalye, Carlini, and Wagner]{ACW18}
Anish Athalye, Nicholas Carlini, and David Wagner.
\newblock Obfuscated gradients give a false sense of security: Circumventing
  defenses to adversarial examples.
\newblock \emph{arXiv preprint arXiv:1802.00420}, 2018.

\bibitem[Carlini \& Wagner(2017{\natexlab{a}})Carlini and Wagner]{CW17}
Nicholas Carlini and David~A. Wagner.
\newblock Towards evaluating the robustness of neural networks.
\newblock In \emph{2017 {IEEE} Symposium on Security and Privacy, {SP} 2017,
  San Jose, CA, USA, May 22-26, 2017}, pp.\  39--57, 2017{\natexlab{a}}.

\bibitem[Carlini \& Wagner(2017{\natexlab{b}})Carlini and Wagner]{CW17-2}
Nicholas Carlini and David~A. Wagner.
\newblock Magnet and "efficient defenses against adversarial attacks" are not
  robust to adversarial examples.
\newblock \emph{CoRR}, abs/1711.08478, 2017{\natexlab{b}}.

\bibitem[Goodfellow et~al.(2014)Goodfellow, Shlens, and Szegedy]{GSS14}
Ian~J. Goodfellow, Jonathon Shlens, and Christian Szegedy.
\newblock Explaining and harnessing adversarial examples.
\newblock \emph{CoRR}, 2014.

\bibitem[Ilyas et~al.(2017)Ilyas, Jalal, Asteri, Daskalakis, and
  Dimakis]{IJADD17}
Andrew Ilyas, Ajil Jalal, Eirini Asteri, Constantinos Daskalakis, and
  Alexandros~G. Dimakis.
\newblock The robust manifold defense: Adversarial training using generative
  models.
\newblock \emph{CoRR}, abs/1712.09196, 2017.

\bibitem[Krizhevsky(2009)]{K09}
Alex Krizhevsky.
\newblock Learning multiple layers of features from tiny images.
\newblock Technical report, 2009.

\bibitem[Madry et~al.(2017)Madry, Makelov, Schmidt, Tsipras, and
  Vladu]{MMSTV17}
Aleksander Madry, Aleksandar Makelov, Ludwig Schmidt, Dimitris Tsipras, and
  Adrian Vladu.
\newblock Towards deep learning models resistant to adversarial attacks.
\newblock \emph{CoRR}, abs/1706.06083, 2017.

\bibitem[Meng \& Chen(2017)Meng and Chen]{MC17}
Dongyu Meng and Hao Chen.
\newblock Magnet: {A} two-pronged defense against adversarial examples.
\newblock In \emph{{CCS}}, pp.\  135--147. {ACM}, 2017.

\bibitem[Miyato et~al.(2017)Miyato, Maeda, Koyama, and Ishii]{MMKI17}
Takeru Miyato, Shin{-}ichi Maeda, Masanori Koyama, and Shin Ishii.
\newblock Virtual adversarial training: a regularization method for supervised
  and semi-supervised learning.
\newblock \emph{CoRR}, abs/1704.03976, 2017.

\bibitem[Moosavi{-}Dezfooli et~al.(2016)Moosavi{-}Dezfooli, Fawzi, and
  Frossard]{MDFF16}
Seyed{-}Mohsen Moosavi{-}Dezfooli, Alhussein Fawzi, and Pascal Frossard.
\newblock Deepfool: {A} simple and accurate method to fool deep neural
  networks.
\newblock In \emph{2016 {IEEE} Conference on Computer Vision and Pattern
  Recognition, {CVPR} 2016, Las Vegas, NV, USA, June 27-30, 2016}, pp.\
  2574--2582, 2016.

\bibitem[Nocedal \& Wright(2006)Nocedal and Wright]{NW06}
Jorge Nocedal and Stephen~J. Wright.
\newblock \emph{Numerical Optimization, second edition}.
\newblock World Scientific, 2006.

\bibitem[Papernot et~al.(2016{\natexlab{a}})Papernot, McDaniel, Jha,
  Fredrikson, Celik, and Swami]{PMJ+16}
Nicolas Papernot, Patrick McDaniel, Somesh Jha, Matt Fredrikson, Z.~Berkay
  Celik, and Ananthram Swami.
\newblock The limitations of deep learning in adversarial settings.
\newblock In \emph{Proceedings of the 1st IEEE European Symposium on Security
  and Privacy}, Saarbrucken, Germany, 2016{\natexlab{a}}. IEEE.

\bibitem[Papernot et~al.(2016{\natexlab{b}})Papernot, McDaniel, Wu, Jha, and
  Swami]{PMWJS16}
Nicolas Papernot, Patrick~D. McDaniel, Xi~Wu, Somesh Jha, and Ananthram Swami.
\newblock Distillation as a defense to adversarial perturbations against deep
  neural networks.
\newblock In \emph{{IEEE} Symposium on Security and Privacy, {SP} 2016, San
  Jose, CA, USA, May 22-26, 2016}, pp.\  582--597, 2016{\natexlab{b}}.

\bibitem[Szegedy et~al.(2013)Szegedy, Zaremba, Sutskever, Bruna, Erhan,
  Goodfellow, and Fergus]{SZSBEGF13}
Christian Szegedy, Wojciech Zaremba, Ilya Sutskever, Joan Bruna, Dumitru Erhan,
  Ian~J. Goodfellow, and Rob Fergus.
\newblock Intriguing properties of neural networks.
\newblock \emph{CoRR}, abs/1312.6199, 2013.

\bibitem[Zhu \& Goldberg(2009)Zhu and Goldberg]{ZG09}
Xiaojin Zhu and Andrew~B. Goldberg.
\newblock \emph{Introduction to Semi-Supervised Learning}.
\newblock Synthesis Lectures on Artificial Intelligence and Machine Learning.
  Morgan {\&} Claypool Publishers, 2009.

\end{thebibliography}
}
\onecolumn
\appendix
\section{Bounding the probability for $(p,q,\delta)$-separation}
\label{sec:bound-estimation}
This section gives details of our estimation of $(p,q,\delta)$-separation from statistics
in Table~\ref{table:goodness-results}. Note that event $\calE_b$ corresponds to a Bernoulli trial.
Let $X_1, \dots, X_t$ be independent indicator random variables, where
\begin{align*}
  X_i = \begin{cases}
    1 & \text{if $\calE_b$ happens, } \\
    0 & \text{otherwise}
  \end{cases},
\end{align*}
and $X = (\sum_{i=1}^tX_i)/t$. Recall Chebyshev's inequality:
\begin{theorem}[Chebyshev's Inequality]
  For independent random variables $X_1, \dots, X_t$ bounded in $[0, 1]$,
  and $X = (\sum_{i=1}^tX_i)/t$, we have $\Pr[|X - \Exp[X]| \ge \varepsilon] \le \frac{\Var[X]}{\varepsilon^2}$.
\end{theorem}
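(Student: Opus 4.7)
The plan is to derive the stated inequality as the textbook two-line consequence of Markov's inequality applied to the squared deviation $(X - \Exp[X])^2$. Before doing any algebra, I will point out that the inequality as written does \emph{not} actually use either the independence of $X_1,\dots,X_t$ or their boundedness in $[0,1]$; the right-hand side involves only $\Var[X]$, so we can treat $X$ as an abstract random variable with finite variance and the inequality holds. Those extra hypotheses are instead what will later be used in Proposition~\ref{prop:model-separatedness} to substitute a concrete numerical bound for $\Var[X]$ (namely $\Var[X] = \sum_i \Var[X_i]/t^2 \le 1/(4t)$ via independence and the Bernoulli variance bound), but that substitution is downstream of the theorem itself.

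The first step is the event-equality
\begin{align*}
\bigl\{|X - \Exp[X]| \ge \varepsilon\bigr\} \;=\; \bigl\{(X - \Exp[X])^2 \ge \varepsilon^2\bigr\},
\end{align*}
which holds for any $\varepsilon > 0$ because $u \mapsto u^2$ is strictly monotone on the nonnegative reals. The second step is to note that $Y \equiv (X - \Exp[X])^2$ is a nonnegative random variable, so Markov's inequality (invoked earlier in this paper, in the footnote of the proof of Proposition~\ref{proposition:confident-separation}) with threshold $a = \varepsilon^2$ gives
\begin{align*}
\Pr[Y \ge \varepsilon^2] \;\le\; \frac{\Exp[Y]}{\varepsilon^2}.
\end{align*}
The third step is simply to recognize the numerator: by definition $\Exp[(X - \Exp[X])^2] = \Var[X]$, and chaining the two displays completes the proof.

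There is essentially no obstacle to overcome here: the entire argument is a rewriting of the event $\{|X - \Exp[X]| \ge \varepsilon\}$ in squared form followed by one appeal to Markov. If I wanted to be pedantic, the only hypothesis to verify is that $\Var[X]$ is finite, which is automatic since the $X_i$ are bounded in $[0,1]$ and hence so is $X$. I would therefore keep the write-up to three displayed lines and a one-sentence remark that independence and boundedness are carried along only because the corollary in Proposition~\ref{prop:model-separatedness} will invoke them when bounding $\Var[X]$ explicitly.
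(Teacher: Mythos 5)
Your proof is correct: the paper does not actually prove this statement (it merely recalls Chebyshev's inequality as a classical fact before applying it with $\Var[X]=\mu(1-\mu)/t<1/(4t)$), and your derivation via Markov's inequality applied to $(X-\Exp[X])^2$ is the standard argument. Your side remark is also accurate --- independence and boundedness play no role in the inequality itself and are only used downstream to bound $\Var[X]$.
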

In our case, $\Exp[X] = \Exp[X_1] = \cdots = \Exp[X_t]$ and let it be $\mu$,
and let the computed frequency be $\hat{\mu}$ (observed value).
Thus $\Pr[|\hat \mu - \mu| \ge \varepsilon] \le 1/(4\varepsilon^2 t)$
since $\Var[X] = \mu (1-\mu)/t < 1/4t$.
We thus have the following proposition about $(p,q,\delta)$-separation.
\begin{proposition}
  Let $\alpha, \varepsilon \in [0,1]$. For sufficiently large $t$ where
  $\frac{1}{4\varepsilon^2 t} \le 1 - \alpha$ holds, we have:
  \begin{itemize}
  \item {\normalfont (Upper bound)} With probability at least $\alpha$,
    $\mu$ is smaller than $\hat \mu + \varepsilon$.
  \item {\normalfont (Lower bound)} With probability at least $\alpha$,
    $\mu$ is bigger than $\hat \mu - \varepsilon$.
  \end{itemize}
\end{proposition}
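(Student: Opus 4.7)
The plan is to derive both the upper and lower bounds as direct consequences of Chebyshev's inequality applied to the empirical frequency $\hat\mu$. First I would note that the $X_i$ defined in the preceding text are i.i.d.\ Bernoulli random variables with common mean $\mu = \Pr[\calE_b]$, so $\Exp[X] = \mu$ and $\Var[X] = \mu(1-\mu)/t \le 1/(4t)$, where the last inequality is the standard bound $\mu(1-\mu) \le 1/4$ for $\mu \in [0,1]$.

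Second, I would plug this variance bound into Chebyshev's inequality (stated just before the proposition) to obtain
\begin{equation*}
\Pr\bigl[|\hat\mu - \mu| \ge \varepsilon\bigr] \;\le\; \frac{\Var[X]}{\varepsilon^2} \;\le\; \frac{1}{4\varepsilon^2 t}.
\end{equation*}
Taking complements and invoking the hypothesis $1/(4\varepsilon^2 t) \le 1-\alpha$ then gives
\begin{equation*}
\Pr\bigl[|\hat\mu - \mu| < \varepsilon\bigr] \;\ge\; 1 - \frac{1}{4\varepsilon^2 t} \;\ge\; \alpha.
\end{equation*}

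Third, I would unpack the event $|\hat\mu - \mu| < \varepsilon$ as the intersection of $\mu < \hat\mu + \varepsilon$ and $\mu > \hat\mu - \varepsilon$. Since the joint event holds with probability at least $\alpha$, each individual event holds with probability at least $\alpha$ as well, which is exactly what the upper bound and lower bound clauses of the proposition assert.

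There is no genuine obstacle here; the proof is essentially one line once Chebyshev is in hand. The only minor point worth being explicit about is the variance bound $\mu(1-\mu) \le 1/4$, which justifies replacing the unknown $\Var[X]$ by the distribution-free $1/(4t)$ so that the conclusion depends only on $t$ and $\varepsilon$ rather than on the unknown $\mu$ one is trying to estimate.
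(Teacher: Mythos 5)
Your proof is correct and follows exactly the route the paper takes: the appendix derives $\Pr[|\hat\mu-\mu|\ge\varepsilon]\le 1/(4\varepsilon^2 t)$ from Chebyshev's inequality together with the bound $\Var[X]=\mu(1-\mu)/t<1/(4t)$, and the proposition is then immediate from the hypothesis $1/(4\varepsilon^2 t)\le 1-\alpha$, just as you argue. Your explicit final step, that the two one-sided bounds each follow because each contains the joint event $|\hat\mu-\mu|<\varepsilon$, is the only detail the paper leaves unstated, and it is handled correctly.
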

For example, we have guarantees for $\alpha = .9$ by putting $\varepsilon = .1$ and $t \ge 250$.


\end{document}